\DeclareMathOperator{\Var}{\mathbb{V}ar}
\DeclareMathOperator{\bbeta}{\boldsymbol{\beta}}
\DeclareMathOperator{\btheta}{\boldsymbol{\theta}}
\DeclareMathOperator{\bphi}{\boldsymbol{\phi}}
\DeclareMathOperator{\bvarphi}{\boldsymbol{\varphi}}
\DeclareMathOperator{\bvartheta}{\boldsymbol{\vartheta}}
\DeclareMathOperator{\z}{\mathbf{z}}
\DeclareMathOperator{\x}{\mathbf{x}}
\DeclareMathOperator{\n}{\mathbf{n}}
\DeclareMathOperator{\given}{|}
\DeclareMathOperator{\dL}{\delta_L}
\DeclareMathOperator{\dC}{\delta_C}
\DeclareMathOperator{\dV}{\delta_V}
\DeclareMathOperator{\dbtheta}{d_{\btheta}}
\newcommand{\LogPart}[1]{B\left(#1\btheta_{t-1} + (1-#1)\btheta_0\right)}
\newcommand{\E}[2]{\mathbb{E}_{#1}\!\left[ #2 \right]}
\newcommand{\befsups}[1]{{}^{#1}\!}
\DeclarePairedDelimiterX{\infdivx}[2]{(}{)}{%
  #1\;\delimsize\|\;#2%
}
\newtheorem{proposition}{Proposition}
\newtheorem{assumption}{Assumption}
\newtheorem{lemma}{Lemma}
\newtheorem{corollary}{Corollary}
\Crefname{figure}{Figure}{Figures}
\Crefname{section}{Section}{Section}
\Crefname{algorithm}{Algorithm}{Algorithm}
\crefname{appsec}{Appendix}{Appendices}
\crefname{theorem}{Theorem}{Theorems}
\crefname{lemma}{Lemma}{Lemmas}
\crefname{assumption}{Assumption}{Assumptions}
\crefname{corollary}{Corollary}{Corrolaries}
\icmltitlerunning{HAFVF: A HRL algorithm for change detection}
\begin{document}

\twocolumn[
\icmltitle{The Hierarchical Adaptive Forgetting Variational Filter}



\icmlsetsymbol{equal}{*}

\begin{icmlauthorlist}
\icmlauthor{Vincent Moens}{1}
\end{icmlauthorlist}

\icmlaffiliation{1}{COSY, Institute of Neuroscience, Universit\'e Catholique de Louvain, Brussels, Belgium}

\icmlcorrespondingauthor{Vincent Moens}{vincent.moens@uclouvain.be}

\icmlkeywords{Variational Inference, Online learning, Adaptive window, Autoregressive models, Stochastic Gradient Descent}

\vskip 0.3in
]



\printAffiliationsAndNotice{}  

\begin{abstract}
A common problem in Machine Learning and statistics consists in detecting whether the current sample in a stream of data belongs to the same distribution as previous ones, is an isolated outlier or inaugurates a new distribution of data.
We present a hierarchical Bayesian algorithm that aims at learning a time-specific approximate posterior distribution of the parameters describing the distribution of the data observed.
We derive the update equations of the variational parameters of the approximate posterior at each time step for models from the exponential family, and show that these updates find interesting correspondents in Reinforcement Learning (RL). In this perspective, our model can be seen as a hierarchical RL algorithm that learns a posterior distribution according to a certain stability confidence that is, in turn, learned according to its own stability confidence.
Finally, we show some applications of our generic model, first in a RL context, next with an adaptive Bayesian Autoregressive model, and finally in the context of Stochastic Gradient Descent optimization.
\end{abstract}

\section{Introduction}
Learning in a changing environment is a difficult albeit ubiquitous task. One key issue for learning in such context is to discriminate between isolated, unexpected events and a prolonged contingency change. This discrimination is challenging with conventional techniques because they rely on prior assumptions about environment stability. When assuming fluctuating context, past experience will be forgotten immediately when an unexpected event occurs, but if that event was just noise, this erroneous forgetting might be very costly. In less variable contexts, model parameters will tend to change more gradually, thus sometimes missing fluctuations when they happen faster than expected. Most models cover one of the two possibilities, and either gradually adapt their predictions to the new contingency or do it abruptly, but not both.

One classical solution to the problem of change detection is to compare the likelihood of the current observation given the previous posterior distribution with a default probability distribution \cite{Kulhavy1984}, representing an initial, naive state of the learner. Usually, the mixing coefficient (or forgetting factor) that is used to weight these two hypotheses is adapted to the current data in order to detect and account for the possible contingency change. This mixing coefficient can be implemented in a linear or exponential manner \cite{Kulhavy1996}. We will focus here on the exponential case.

In the past decade, several Bayesian solutions to this problem based on the aforementioned strategy have been proposed \cite{Smidl2004,Smidl2012,Azizi2015}. However, they usually suffer from several drawbacks: many of them put a restrictive prior on the mixing coefficient (e.g. \cite{Smidl2004,Masegosa2017}) and cannot account for the fact that an unexpected event is unlikely to be caused by a contingency change if the environment has been stable for a long time. 

We propose the Hierarchical Adaptive Forgetting Variational Filter (HAFVF). The core idea of the model is that the the mixing coefficient can be learned as a latent variable with its own mixing coefficient. It is inspired by the observation that animals tend to decrease their flexibility (i.e. their capacity to adapt to a new contingency) when they are trained in a stable environment and that this flexibility is inversely correlated with the training length \cite{Dickinson1985}. We suggest that this strategy may be beneficial in many environments, where the stability of the system identified by a learner is a variable that can be learned as an independent variable with a certain confidence: in certain environments, contingency changes are inherently more less than in others. Although this assumption may not hold in every case, we show that it helps the algorithm to stabilize and discriminate contingency changes from accidents.

Accordingly, we frame our algorithm in a RL framework. We explore how the forward learning algorithm can be extended to the forward-backward case. We show three applications of our model: first in the case of a simple RL task, next to fit an autoregressive model and finally for gradient learning in a Stochastic Gradient Descent (SGD) algorithm.

\section{Hierarchical Model}

\begin{figure}[t]
\vskip 0.2in
\begin{center}
\centerline{\includegraphics[width=\columnwidth]{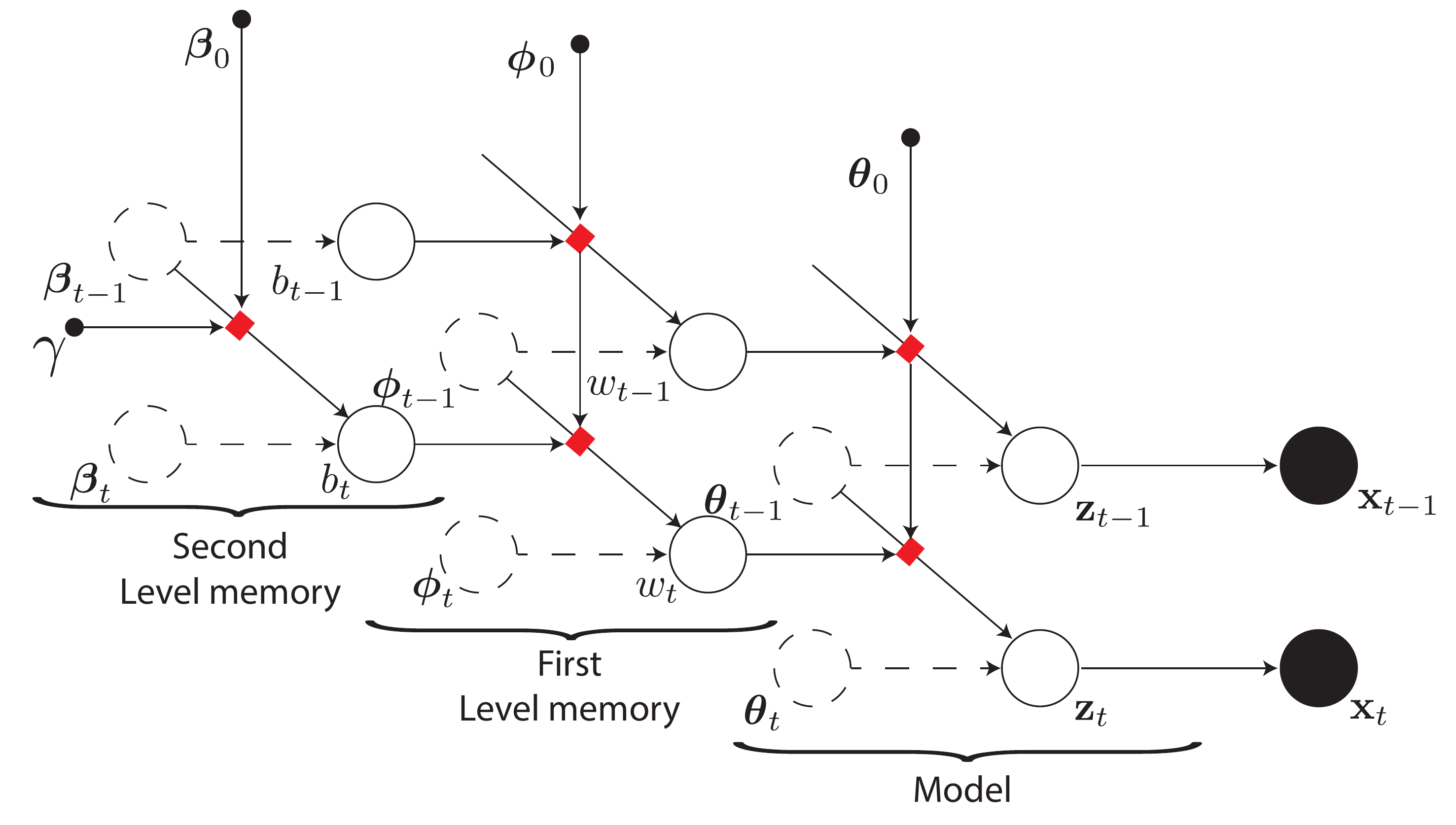}}
\caption{Directed Acyclic Graph of the HAFVF. Latent variables are represented by white circles. Mixture of distributions is represented by red squares. The three levels of the model are displayed: latent variables are distributed with probability $p(\cdot\given\z)$. The prior of latent variable $\z$ is a mixture of the previous posterior distribution and an initial prior with parameters $\btheta_0$. The mixing coefficient is itself distributed according to a similar mixture of distributions with coefficient $b$.}
\label{fig:DAG}
\end{center}
\vskip -0.2in
\end{figure}

Let $\x = \{x_1,x_2,...,x_T\}$ be a stream of data distributed according to a set of $N$ distributions $\textbf{p}=\{p_1(\x_{1:n_1} \given \z_1),... , p_N(\x_{n_{N-1}+1:n_N}\given\z_N)\}$, where the change trials $\n:=\left\{n_1,n_2,...n_N\leq T\right\} \in \mathbb{Z}^+$ are unknown and unpredictable. We make the following assumptions:


\begin{assumption}\label{Assumption1b}
Let $\{t_1, t_2\} \in T$ and $n_x<t_1<t_2$, then $p(t_2 \in \n)\leq p(t_1 \in \n)$.
\end{assumption}
\begin{corollary}\label{corollary1}
If $\text{m}(x_t,p_n)$ is a measure of the relative probability that $x_t$ belongs to $p_n$ wrt $p_0$, and if $x_2=x_1$, then $\text{m}(x_2,p_n)\geq\text{m}(x_1,p_n)$.
\end{corollary}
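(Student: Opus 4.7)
The plan is to express $\text{m}(x_t, p_n)$ as a posterior odds ratio and then use Bayes' rule to isolate the time-varying prior from the value-dependent likelihood. Under the mixture-prior construction of \Cref{fig:DAG}, the generative model for $x_t$ at time $t$ after the most recent change $n_x$ takes the form $w_t\, p(\cdot \mid p_n) + (1 - w_t)\, p(\cdot \mid p_0)$, where $w_t$ weights the ``no change at $t$'' hypothesis and $1-w_t = p(t \in \n)$ weights the ``change at $t$'' hypothesis. A natural formalization of the relative probability that $x_t$ belongs to $p_n$ with respect to $p_0$ is then
\[
\text{m}(x_t, p_n) \;=\; \frac{w_t \, p(x_t \mid p_n)}{(1 - w_t)\, p(x_t \mid p_0)}.
\]

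From here the argument splits into two short steps. First, the likelihood ratio $p(x \mid p_n)/p(x \mid p_0)$ depends only on the value $x$ and not on the time index, so the hypothesis $x_1 = x_2$ makes this factor identical at the two times. Second, reading the corollary as a statement about the same value observed at successive times $t_1 < t_2$, both lying after the last change $n_x$, \Cref{Assumption1b} yields $1 - w_{t_2} = p(t_2 \in \n) \leq p(t_1 \in \n) = 1 - w_{t_1}$, and hence $w_{t_2}/(1 - w_{t_2}) \geq w_{t_1}/(1 - w_{t_1})$. Multiplying the unchanged likelihood factor by this nondecreasing prior-odds factor delivers $\text{m}(x_2, p_n) \geq \text{m}(x_1, p_n)$.

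The principal difficulty is interpretive rather than technical. The statement tacitly assumes an ordering $t_1 < t_2$, that both times sit inside the stable stretch beyond $n_x$, and that $\text{m}$ is precisely the posterior odds above; none of these are spelled out, and the inequality can fail if, for example, $t_2$ straddles a subsequent change point so that \Cref{Assumption1b} no longer controls $1 - w_{t_2}$. Pinning down these conventions and checking them against the mixture construction depicted around \Cref{fig:DAG} is where essentially all of the work lies; once that is done, the proof is a two-line consequence of Bayes' rule together with \Cref{Assumption1b}.
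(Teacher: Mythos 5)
Your proposal is correct and matches the paper's reasoning: the paper offers no formal proof of \Cref{corollary1}, treating it as an immediate consequence of \Cref{Assumption1b} (the surrounding prose argues exactly that the prior odds of ``no change'' grow with time while the evidence contributed by an identical observation is unchanged, so more evidence is needed to revert to the naive belief after a longer stable run). Your posterior-odds formalization, with the likelihood ratio fixed by $x_1=x_2$ and the prior-odds factor nondecreasing by \Cref{Assumption1b}, is precisely that argument made explicit, and your caveats about the implicit conventions ($t_1<t_2$, both past the last change point) are apt.
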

\Cref{Assumption1b} and \Cref{corollary1} state that the probability of seeing a contingency change decreases with time in a steady environment. This might seem counter-intuitive or even maladaptive in many situations, but it is a key assumption we use to discriminate artifacts from contingency change: after a long sequence, the amount of evidence needed to switch from the current belief to the naive belief is greater than after a short sequence.
This assumption will lead us to build a model where, if the learner is very confident in his belief, it will take him more time to forget past observations, because he will need more evidence for a contingency change. Therefore, in this context, the learner aims not only to learn the distribution of the data at hand, but also a measure of the confidence in the steadiness of the environment.

\begin{assumption}\label{Assumption2}
In the set of probability distributions $\textbf{p}$, all elements have the same parametric form that belongs to the exponential family and have a conjugate prior that is also from the exponential family:
\[
p_n\in\textbf{p} \implies p_n(x_t | \z_n)=h(x_t) \exp\left\{\z^T_n \mathbf{T}(x_t) - A(\z_n)\right\}
\]
and $p_n(\z_n)=\exp\left\{\mathbf{T}(\z_n)^T \btheta - B(\btheta)\right\}.$
\end{assumption}

We now focus on the problem of approximating the current posterior distribution of $\z_{n_x}$ given the current and past observations. For clarity, we will make the $n$ subscripts implicit in the following.
Let us first focus on the problem of estimating the posterior distribution of $\z$ in the stationary case. 
After $t$ steps, and given some prior distribution $p(\z | \btheta_0)$, the posterior distribution can be formulated recursively as:
\begin{equation*}
p(\z | x_t,\x_{<t}) = \left\{ \begin{array}{lll}
         & \frac{p(x_t|\z) p(\z | \x_{<t})}{p(x_t | \x_{<t})} & \text{ for } t>1\\
         & \frac{p(x_t|\z) p(\z | \btheta_0)}{p(x_t)} &\text{ otherwise.}
    \end{array} \right. 
\end{equation*}
Given the restriction imposed by \Cref{Assumption2}, this posterior probability distribution has a closed-form expression and can be estimated efficiently.

We enrich this basic model by first formulating the prior distribution of $\z$ at $t$ as a mixture of the previous posterior distribution and an arbitrary prior:
\begin{equation}\label{eq:MixtDist}
p_t(\z \given \x_{<t} ; \btheta_0, w) =
\frac{p_{t-1}(\z \given \x_{<t})^w p(\z \given \btheta_0)^{1-w}}{Z(w,\x_{<t},\btheta_0)}.
\end{equation}
Following \Cref{Assumption2}, the conjugate distribution $p_{t-1}(\z \given \x_{<t})$ is also from the exponential family and reads
\begin{equation*}
    p_{t-1}(\z| \x_{<t}) := \exp\left\{\z^T \btheta^{\xi} - \btheta^{\eta}A(\z)-B(\btheta)\right\}
\end{equation*}
where we have expanded $\mathbf{T}(\z)$, where $\btheta=\{\btheta^\xi,\btheta^\eta\}$. $\btheta^\eta$ is the part of $\btheta$ that indicates the effective (prior) number of observations. 
If $p_0$ has the same form as $p_{t-1}(\z \given \x_{<t})$, then the log-partition function $Z$ can be computed efficiently \cite{Mandt2014b}: 
\begin{equation*}
\begin{aligned}
Z(w,\btheta_{t-1},\btheta_0) = &\exp \big\{ -wB(\btheta_{t-1}) - (1-w)B(\btheta_0) +\\ &\LogPart{w} \big\}.
\end{aligned}
\end{equation*}
Note that this result simplifies when combined with the numerator of \Cref{eq:MixtDist}:

\begin{equation}\label{eq:MixtExpPrior}
    \begin{aligned}
        p(&\z \given \btheta_{t-1},\btheta_0,w) = \exp \left\{ \mathbf{T}(\z)^T \bvartheta-B(\bvartheta)\right\}
    \end{aligned}
\end{equation}
where $\bvartheta:=w\btheta_{t-1}+(1-w)\btheta_0$. The latent variable $w \in [0;1]$ weights the initial prior with the posterior at the previous trial. We incorporate this variable in the set of the latent variables, and, we put a mixture prior on $w$ with a weight $b$: following this approach, the previous posterior probability of $w$ conditions the current one (similarly to $\z$), together with a prior that is blind to the stream of data up to now. Assuming that $x$, $z$ and $w$ each can be generated by changing distributions, the joint probability now reads:
\begin{align}\label{eq:FullJoint}
\begin{split}
    p(x_t,\z,w,b &| \x_{<t};\btheta_0,\bphi_0,\bbeta_0,\gamma) := p(x_t \given \z) \times\\&\frac{p_{t-1}(\z\given\x_{<t})^w p(\z\given\btheta_0)^{1-w}}{Z(w,\x_{<t},\btheta_0)}\times\\
    &\frac{p_{t-1}(w\given \x_{<t})^b p(w\given\bphi_0)^{1-b}}{Z(b,\x_{<t},\bphi_0)}\times\\
    &\frac{p_{t-1}(b\given \x_{<t})^\gamma p(b\given\bbeta_0)^{1-\gamma}}{Z(\gamma,\x_{<t},\bbeta_0)}
\end{split}    
\end{align}
where we have assumed that the posterior probability $p(\z,w,b|\x)$ factorizes (Mean-Field assumption), and where $\{\btheta_0,\bphi_0,\bbeta_0\}$ are the parameters of the naive, initial prior distributions over $\{\z,w,b\}$ respectively.
The model presented in \Cref{eq:FullJoint} is not conjugate anymore, and the posterior probability does not generally have an analytical solution. We therefore introduce a variational posterior to approximate the posterior probability $p(\z,w,b\given\x)$. In short, Variational Inference \cite{Jaakkola2000} works by replacing the posterior by a proxy of an arbitrary form and finding the configuration of this approximate posterior that minimizes the Kullback-Liebler divergence between this distribution and the true posterior. This is virtually identical to maximizing the Expected Lower-Bound to the log-model evidence (ELBO).

For simplicity, we use a factorized variational posterior $q_t(\z,w,b)=q_t(\z\given\btheta_t)q_t(w\given\bphi_t)q_t(b\given\bbeta_t)$ where each factor has the same form as the prior distribution of its latent variable. Assuming that $q_{t-1}(\cdot)\approx p_{t-1}(\cdot)$ \Cref{eq:FullJoint} conveniently simplifies to:
\begin{equation}\label{eq:FullJointVB}
\begin{aligned}
    p(x_t,\z,w,b &| \x_{<t};\btheta_0,\bphi_0,\bbeta_0,\gamma) \approx p(x_t \given \z) \times\\
    &p(\z\given w (\btheta_{t-1}-\btheta_0)+\btheta_0)\times\\
    &p(w\given b (\bphi_{t-1}-\bphi_0)+\bphi_0)\times\\
    &p(b\given \gamma (\bbeta_{t-1}-\bbeta_0)+\bbeta_0).
\end{aligned}
\end{equation}
This model is shown in \Cref{fig:DAG}.
In what follows, we will restrict our analysis to the case where $w$ and $b$ are Beta distributed, meaning that the approximate posterior we will optimize for these two variables will also be a Beta distribution.

\subsection{Update equations}
\paragraph{Notation} 
We first define the following notation: $\dbtheta:=\btheta_{t-1}-\btheta_0$ is the difference between the previous approximate posterior and the initial prior. We use $\bvartheta:=w\btheta_{t-1}+(1-w)\btheta_0$ as the weighted prior parameters, and $\widehat{\bvartheta}$ as the expectation of $\bvartheta$ under $q(w)$. Similarly, $\bvarphi$ and $\widehat{\bvarphi}$ are the weighted prior over $w$ and its expectation under $q$, respectively. Also, we will often abbreviate the summary statistics of $\z$ as $\mathbf{T}(\z):=\left[\begin{array}{c}
     \z  \\
     -A(\z) 
\end{array}\right]$.

We now focus on the problem of finding the approximate posterior configuration that maximizes the ELBO. Various techniques have been developed to solve this problem: whereas Stochastic Gradient Variational Bayes \cite{Kingma2013} and Stochastic Variational Bayes \cite{Hoffman2013} work well for large datasets, more traditional conjugate \cite{Winn2005} or non-conjugate \cite{Knowles2011} Variational Message Passing (VMP) algorithms are better suited for our problem. This technique indeed allows us to derive closed-form update equations that can be sequentially applied to each of the nodes of the factorized posterior distribution until a certain convergence criterion is met. We interpret these results in a Hierarchical Reinforcement Learning framework, where each level adapts its learning rate (LR) as a function of expected log-likelihood of the current observation given the past.

Fortunately, under the form of the approximate posterior we chose and using Conjugate VMP, the variational parameters of the posterior over the latent parameters $\z$ have a simple form given the current value of $\bphi_t$ and $\bbeta_t$. For a number of $J$ observations observed at time $t$, we have:
\begin{align}\label{eq:updCVMP}
    \btheta^\xi_t &= \widehat{\bvartheta}^\xi + \sum_{j=1}^J \mathbf{T}({x_t}_j)\\
    \btheta^\eta_t &= \widehat{\bvartheta}^\eta + J
\end{align}
\Cref{eq:updCVMP} finds an interesting correspondent in the RL literature. 
Consider the limit case where $\btheta_0=0$ and $J=1$ (which is still analytically tractable following \Cref{eq:MixtExpPrior}).
As the expectation of a distribution of the exponential family has the general form $\E{p(x\given\z)}{\mathbf{T}(x)}=d A(\z)/d \z$, one can derive a similar posterior expectation of $\z$ \cite{Diaconis1979}:
\begin{equation}
    \E{q(\z,w)}{\z}=\frac{\widehat{\bvartheta}^\xi+ \mathbf{T}({x_t})}{\widehat{\bvartheta}^\eta + 1\nonumber}
\end{equation}
Now, replacing $\frac{1}{\bvartheta^\eta + 1}$ by $\alpha$, the above expression becomes \cite{Mathys2016} 
\begin{equation}\label{eq:RL}
    \E{q(\z,w)}{\z}=Q + \alpha (\mathbf{T}({x_t})-Q)\\
\end{equation}
where $Q:=\frac{\btheta_{t-1}^\xi}{\btheta_{t-1}^\eta}$ is the average $\z$ at the time of the previous observation and $\alpha$ is the LR, whose value is inversely proportional to the effective memory $\btheta_{t-1}^\xi$ and to the current expected value of the forgetting factor $\E{q(w)}{w}$\footnote{One can easily see that $\E{q(w)}{w}$ dictates the memory of the learner. If $J=1$ and assuming that $\E{q(w)}{w}$ is stationary, we have: $\lim_{t\to\infty}{\btheta^\eta_t}=\btheta^\eta_0+\frac{1}{1-\E{q(w)}{w}}$}. \Cref{eq:RL} is a classical incremental update rule in RL \cite{Sutton1998}, and our algorithm can be viewed as a special case of such algorithms where the LR is adapted online to the data at hand.

The update equations of $\bphi_t$ is, however, not as simple to derive as $\btheta_t$, because $p(\z \given \btheta_{t-1},\btheta_0,w)$ is not conjugate to its Beta prior $p(w \given \bphi_{t-1}, \bphi_0, b)$. To solve this problem, we used a Non-Conjugate VMP approach \cite{Knowles2011}. Briefly, NCVMP minimizes an approximate KL divergence in order to find the value of the approximate posterior parameters that maximize the ELBO. In order to use NCVMP, the first step is to derive the expected log-joint probability of the model, which we will need to differentiate wrt $\bphi_t$ (or, in the case of the approximate posterior update rule for $b$, $\bbeta_t$). It quickly appears that part of this expression does not always have an analytical form for common exponential distributions: indeed, the expected value of $\E{q(w)}{B(\bvartheta)}$ is, in general, intractable and needs to be approximated. Expanding the Taylor series of this expression around $\widehat{w}=\E{q(w)}{w}$ up to the second order and taking the expectation, we have:
\begin{align}\label{eq:logpart}
    &\E{q(w)}{B(\bvartheta)} \approx B(\widehat{\bvartheta})+ \frac{1}{2}\E{q(w)}{(w-\widehat{w})^2}\triangledown^2_{\widehat{w}}B(\widehat{\bvartheta})
\end{align}
Notice that the second term of the sum in \Cref{eq:logpart} can be expressed as $\frac{1}{2}\Var_{q(w)}[w] \dbtheta^T C(\mathbf{T}(\z) | \widehat{\bvartheta}) \dbtheta$, where $C(\mathbf{T}(\z) | \widehat{\bvartheta})$ is the prior covariance of $\mathbf{T}(\z)$. Hence, this penalty term becomes important when the product of the following factors increase: the distance between the previous posterior and initial prior $\dbtheta$, the posterior variance of $w$ and the prior covariance of $\mathbf{T}(\z)$. This has the effect of favoring values of $\bphi_t$ and $w$ that have a low variance, especially when the two proposed distributions, $q_{t-1}$ and $p_0$, are very distant from each other.

We now derive the update equation for the approximate posterior of $w$. Let us first define
\begin{align}\label{eq:notation}
\begin{split}
        \dL &:= \frac{d}{d \widehat{w}} \E{q(\z)}{\log p(\z \given \widehat{\bvartheta})}\\
    \dC &:= -\frac{1}{2}\Var_{q(w)}[w]\times\\&\frac{d}{d \widehat{w}}\dbtheta^T C(\mathbf{T}(\z)\given\bvartheta) \dbtheta\\
    \dV &:= -\frac{1}{2}\dbtheta^T C(\mathbf{T}(\z)\given\widehat{\bvartheta}) \dbtheta\times\\&C(\log w\given\bphi)^{-1} \nabla_{\bphi}\Var_{q(w)}[w].
\end{split}
\end{align}
We obtain the following result:

\begin{proposition}\label{proposition1}
Using Algorithm 1 of \cite{Knowles2011}, the update equation for $\bphi_t$ has the form:
\begin{align}\label{eq:UpdPhi}
\begin{split}
    \phi^\alpha_t &=\widehat{\varphi}^\alpha+K(\phi^\alpha_t,\phi^\beta_t) \dL + K(\phi^\alpha_t,\phi^\beta_t) \dC + \dV^\alpha \\
    \phi^\beta_t &= \widehat{\varphi}^\beta\underbrace{-K(\phi^\beta_t,\phi^\alpha_t) \dL}_{u_1}\underbrace{- K(\phi^\beta_t,\phi^\alpha_t) \dC}_{u_2} + \underbrace{\dV^\beta}_{u_3}
\end{split}
    \end{align}
\begin{align*}
\begin{split}
        \text{where  } K(x,y) &:=\frac{M x+L(y) y}{(L(x)L(y)-M^2)(x+y)^2}>0\\
        L(x) &:= \psi_1(x)+M\\
        M &:= -\psi_1(\phi^\alpha_t+\phi^\beta_t)\\
\end{split}
\end{align*}
and $\psi_n(\cdot)$ is the n$^{\text{th}}$ order polygamma function.
\end{proposition}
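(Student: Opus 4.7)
My plan is to apply Algorithm~1 of \cite{Knowles2011}: for each factor incident to $q_t(w \given \bphi_t) = \mathrm{Beta}(\phi^\alpha_t, \phi^\beta_t)$, compute the NCVMP message $C(\log w \given \bphi_t)^{-1}\nabla_{\bphi_t}\E{q}{\log f}$, and set $\bphi_t$ to the sum of these messages. The covariance of the Beta sufficient statistics is
\begin{equation*}
C(\log w \given \bphi_t) = \begin{pmatrix} L(\phi^\alpha_t) & M \\ M & L(\phi^\beta_t) \end{pmatrix},
\end{equation*}
with inverse $\tfrac{1}{L(\phi^\alpha_t)L(\phi^\beta_t)-M^2}\begin{pmatrix} L(\phi^\beta_t) & -M \\ -M & L(\phi^\alpha_t) \end{pmatrix}$; this is what will generate the $K$ coefficients in the final update.

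Two factors contribute to $\bphi_t$. The first is the Beta--Beta factor $p(w\given \bvarphi)$: since $\bvarphi$ is linear in $b$, taking the expectation under $q(b)$ yields $\widehat{\bvarphi}$, and conjugacy makes the NCVMP message collapse to the standard VMP message $\widehat{\bvarphi}$ (the $C^{-1}$ and $C$ cancel because $\nabla_{\bphi_t}\E{q(w)}{\log p(w\given\widehat{\bvarphi})}$ equals $C(\log w\given\bphi_t)\,\widehat{\bvarphi}$ in natural parameters). The second factor, $p(\z\given\bvartheta)$, is the non-conjugate one; using \Cref{eq:MixtExpPrior} together with the quadratic approximation \Cref{eq:logpart}, the expected log-factor $S(\bphi_t):=\E{q(\z)q(w)}{\log p(\z \given \bvartheta)}$ depends on $\bphi_t$ only through the two scalars $\widehat{w}$ and $\Var_{q(w)}[w]$.

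The core calculation is $C(\log w \given \bphi_t)^{-1}\nabla_{\bphi_t}S$. By the chain rule,
\begin{equation*}
\nabla_{\bphi_t}S = (\dL+\dC)\,\nabla_{\bphi_t}\widehat{w} \;+\;\Bigl(-\tfrac{1}{2}\dbtheta^T C(\mathbf{T}(\z)\given \widehat{\bvartheta})\dbtheta\Bigr)\nabla_{\bphi_t}\Var_{q(w)}[w],
\end{equation*}
where $\dL$ and $\dC$ collect the $\widehat{w}$-derivatives of the linear and Taylor-quadratic parts of $S$ respectively. After multiplication by $C(\log w\given\bphi_t)^{-1}$, the second term matches $\dV$ exactly by its definition. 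For the first term, I would substitute $\widehat{w}=\phi^\alpha_t/(\phi^\alpha_t+\phi^\beta_t)$ so that $\nabla_{\bphi_t}\widehat{w}=(\phi^\beta_t,-\phi^\alpha_t)^T/(\phi^\alpha_t+\phi^\beta_t)^2$, then multiply by the inverse covariance above; the algebra collapses to $(K(\phi^\alpha_t,\phi^\beta_t),\,-K(\phi^\beta_t,\phi^\alpha_t))^T$.

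Adding the conjugate message $\widehat{\bvarphi}$ to the non-conjugate message just computed yields \Cref{eq:UpdPhi}. The sign asymmetry of the $\dL$ and $\dC$ terms between the two update equations originates from the opposite signs of the two components of $\nabla_{\bphi_t}\widehat{w}$, whereas $\dV^\alpha$ and $\dV^\beta$ both enter with a $+$ because the signs of $\nabla_{\bphi_t}\Var_{q(w)}[w]$ have already been absorbed into their definitions. The main obstacle is the linear-algebra verification that $C(\log w\given\bphi_t)^{-1}\nabla_{\bphi_t}\widehat{w}$ produces precisely the $K$ function stated; the remainder reduces to bookkeeping around the Taylor expansion of $B(\bvartheta)$ and the chain rule.
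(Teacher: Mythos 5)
Your proposal is correct and follows essentially the same route as the paper, which simply invokes Algorithm~1 of \cite{Knowles2011} and defers the computation to the supplementary material: you identify the two factors incident to $q_t(w)$, note that the conjugate Beta--Beta factor contributes the standard VMP message $\widehat{\bvarphi}$, and for the non-conjugate factor $p(\z\given\bvartheta)$ apply the chain rule through $\widehat{w}$ and $\Var_{q(w)}[w]$ after the Taylor approximation of $\E{q(w)}{B(\bvartheta)}$. The linear-algebra step you flag does check out: with $\nabla_{\bphi_t}\widehat{w}=(\phi^\beta_t,-\phi^\alpha_t)^T/(\phi^\alpha_t+\phi^\beta_t)^2$ and the stated inverse covariance, the product is exactly $\left(K(\phi^\alpha_t,\phi^\beta_t),\,-K(\phi^\beta_t,\phi^\alpha_t)\right)^T$.
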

\begin{proof}
Follows directly from Algorithm 1 in \cite{Knowles2011}\footnote{The full development can be found in the supplementary materials.}. 
\end{proof}
The update equation in \Cref{eq:UpdPhi} can be easily transposed for $\bbeta_t$.

In \Cref{proposition1}, we show that the update of $\phi_t$ can be decomposed in four terms: the first is the (weighted) prior $\bvarphi$, which acts as a reference for the update.

The second term, $u_1$, depends upon $\dL$, the derivative wrt $\widehat{w}$ of the expectation of the log probability $p(\z\given\widehat{\bvartheta})$ over $\z$, times a constant $K(\cdot,\cdot)$. $\dL$ has a simple form:
\begin{lemma}\label{lemma1}
The derivative of the first order Taylor expansion of the expected log probability $\log p(\z):=\log p(\z\given\widehat{\bvartheta})$ around $\widehat{w}$ has the form
\begin{equation}\label{eq:dL}
\dL:=\bigg(\left(\E{q(\z)}{\mathbf{T}(\z)}-\E{p(\z)}{\mathbf{T}(\z)}\right)^T\dbtheta\bigg).\nonumber
\end{equation}
\end{lemma}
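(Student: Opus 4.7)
The plan is to exploit the exponential-family form of $p(\z\given\bvartheta)$ granted by \Cref{Assumption2} so that the derivative reduces to a closed-form manipulation of the log-partition function. By \Cref{Assumption2} one has $\log p(\z\given\bvartheta) = \mathbf{T}(\z)^T\bvartheta - B(\bvartheta)$, and since $\widehat{\bvartheta}(\widehat{w}) := \widehat{w}\btheta_{t-1} + (1-\widehat{w})\btheta_0$ is affine in $\widehat{w}$, the chain rule gives $d\widehat{\bvartheta}/d\widehat{w} = \btheta_{t-1}-\btheta_0 = \dbtheta$.

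Next, noting that the variational density $q(\z)$ does not depend on $\widehat{w}$, the expectation can be taken before differentiating:
\[
\E{q(\z)}{\log p(\z\given\widehat{\bvartheta})} = \E{q(\z)}{\mathbf{T}(\z)}^T\widehat{\bvartheta} - B(\widehat{\bvartheta}).
\]
Differentiating with respect to $\widehat{w}$ using the chain rule and the above expression for $d\widehat{\bvartheta}/d\widehat{w}$ gives $\E{q(\z)}{\mathbf{T}(\z)}^T\dbtheta - \bigl(\nabla_{\widehat{\bvartheta}}B(\widehat{\bvartheta})\bigr)^T\dbtheta$. Finally, the standard moment identity for exponential families, $\nabla_{\bvartheta}B(\bvartheta) = \E{p(\z\given\bvartheta)}{\mathbf{T}(\z)}$, turns the second term into $\E{p(\z\given\widehat{\bvartheta})}{\mathbf{T}(\z)}^T\dbtheta$, and collecting terms yields the stated formula.

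The phrase ``derivative of the first-order Taylor expansion around $\widehat{w}$'' is a restatement of evaluating $d/dw$ at $w=\widehat{w}$: the linear Taylor polynomial $f(\widehat{w}) + f'(\widehat{w})(w-\widehat{w})$ has constant $w$-derivative $f'(\widehat{w})$. Moreover, $\log p(\z\given\bvartheta(w))$ is already linear in $w$ through the sufficient-statistic term $\mathbf{T}(\z)^T\bvartheta$, with only $B(\bvartheta(w))$ contributing nonlinearly, so no truncation approximation like the one used for $\E{q(w)}{B(\bvartheta)}$ in \Cref{eq:logpart} is needed at this step. Nothing in the argument is substantively difficult; the only piece of bookkeeping worth flagging is that the two expectations appearing in the result are taken under different densities, $q(\z)$ and the weighted prior $p(\z\given\widehat{\bvartheta})$, which is easy to conflate but essential for interpreting $\dL$ correctly in the NCVMP update.
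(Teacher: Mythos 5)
Your proof is correct and is the natural derivation the paper intends: write $\E{q(\z)}{\log p(\z\given\widehat{\bvartheta})}=\E{q(\z)}{\mathbf{T}(\z)}^T\widehat{\bvartheta}-B(\widehat{\bvartheta})$, differentiate along the affine path $\widehat{\bvartheta}=\widehat{w}\btheta_{t-1}+(1-\widehat{w})\btheta_0$ so that $d\widehat{\bvartheta}/d\widehat{w}=\dbtheta$, and apply the log-partition moment identity $\nabla_{\bvartheta}B=\E{p(\z\given\bvartheta)}{\mathbf{T}(\z)}$. The paper relegates its proof to the supplementary materials, but its definition of $\dL$ in \Cref{eq:notation} and its exponential-family setup make clear this is the same argument, and your remarks on the Taylor-expansion phrasing and on the two different expectation measures are accurate.
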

The proof is given in the supplementary materials. The expression of $\dL$ is easily understood as a measure of similarity between the current update of the variational posterior $q_t(\z)$ and the previous posterior dependent prior $p(\z)$. Note that a rather straightforward result of \Cref{eq:dL} is that $\lim_{\btheta^\eta_{t}\to\infty}\dL=0$: as the posterior becomes stronger, the relative change that one can expect tends to zero, and the impact of $\dL$ on the update of $\bphi$ can be expected to decrease. This is the behaviour we aimed at: a very strong posterior probability becomes more and more difficult to change as the training time increases.

Note also the opposite sign of the $\dL$ related increment in \Cref{eq:UpdPhi} for $\phi^\alpha_t$ and $\phi^\beta_t$. This implies that if $\dL>0$, then $u^\alpha_1>0$, and the update of $\phi^\alpha_t$ will tend to increase. The opposite is true for $\phi^\beta_t$, showing that the posterior of $w$ effectively deals with the similarity between the current observation and the previous ones.

The third and fourth term of \Cref{eq:UpdPhi}, $u_2$ and $u_3$, are conditioned by the posterior variance of $w$ and the prior variance of $\mathbf{T}(\z)$. In brief, they push the update of $\bphi$ in a direction that lowers the variance of both $\btheta_t$ and $\bphi$. We will show in the next section a simple example of the relative contribution that each of these terms has in the update.

An important consideration to make is that the value of $\bphi$ must be $>0$, which implies that $u_1+u_2+u_3>-\bvarphi$, a restriction that may be violated in practice, especially for low values of $\bvarphi$. In such cases, we reset the value of $\bphi_t$ to some arbitrary value (typically $\bphi\gg0$) where the above inequality holds, and resume the update loop until convergence or until a certain amount of iterations is reached. Note that NCVMP is not guaranteed to converge, but, as suggested by \cite{Knowles2011}, the use of a form of exponential damping can improve the convergence of the algorithm.

\subsection{Example: Binary distribution learning}

In order to understand better the relative contribution of $u_1,u_2$ and $u_3$ to the variational update scheme, we generated a sequence of 200 binary data distributed according to a binomial distribution, whose probability was switching between $0.8$ and $0.2$ every 40 trials. This distribution can be modelled as a hierarchy of beta distributions, where the first level is a Bernoulli distribution with a conjugate, Beta approximate posterior, and the one or two levels above are both Beta distributions measuring the stability of the level below. We simulated the learning process in three cases:
\begin{itemize}
\item A two-layer HAFVF model, where only the posterior over $\z$ could be forgotten (incremental). 
\item A two-layer HAFVF model, where the posterior of $w$ was being forgotten at a fixed rate (i.e. $b$ fixed to $0.75$). 
\item A three-layer HAFVF model, where the posterior of $\beta$ was being forgotten at a rate of $\gamma=0.999$.
\end{itemize}

In each of these examples, we used the following implementation: the beta prior of the first level was set to $\btheta_0=1$. The value of $\bphi_0$ was set to $\{0.9,0.1\}$, which showed to be a good compromise between informativeness and freedom to fit the data. If applicable, the top-level was set to $\bbeta=\{0.75,0.25\}$.

In the first case, the fitting rapidly degenerated, as the memory grew at each trial. \Cref{fig:Binary12}, left column, gives a hint about the reason of this behaviour: each observation decreases the prior covariance $C(\mathbf{T}(\z)\given\bvartheta)$, which results in a positive increment for both $\phi^\alpha_t$ and $\phi^\beta_t$ through $u_3$. This can be viewed as a form of confirmation bias: because the posterior over $w$ and $\z$ are confident about the distribution of the data, they tend to reinforce each other and loose flexibility. Consequently, the impact of the contingency changes decreases as learning goes on. This might seem undesirable (and, in this pathological case, it is indeed the case), but in the case of datasets with outliers it can be very beneficial: a longer training in a stable environment will require a longer and/or stronger sequence of outliers to reset the parameters.
\begin{figure}[t]
\vskip 0.2in
\begin{center}
\centerline{\includegraphics[width=0.98\columnwidth]{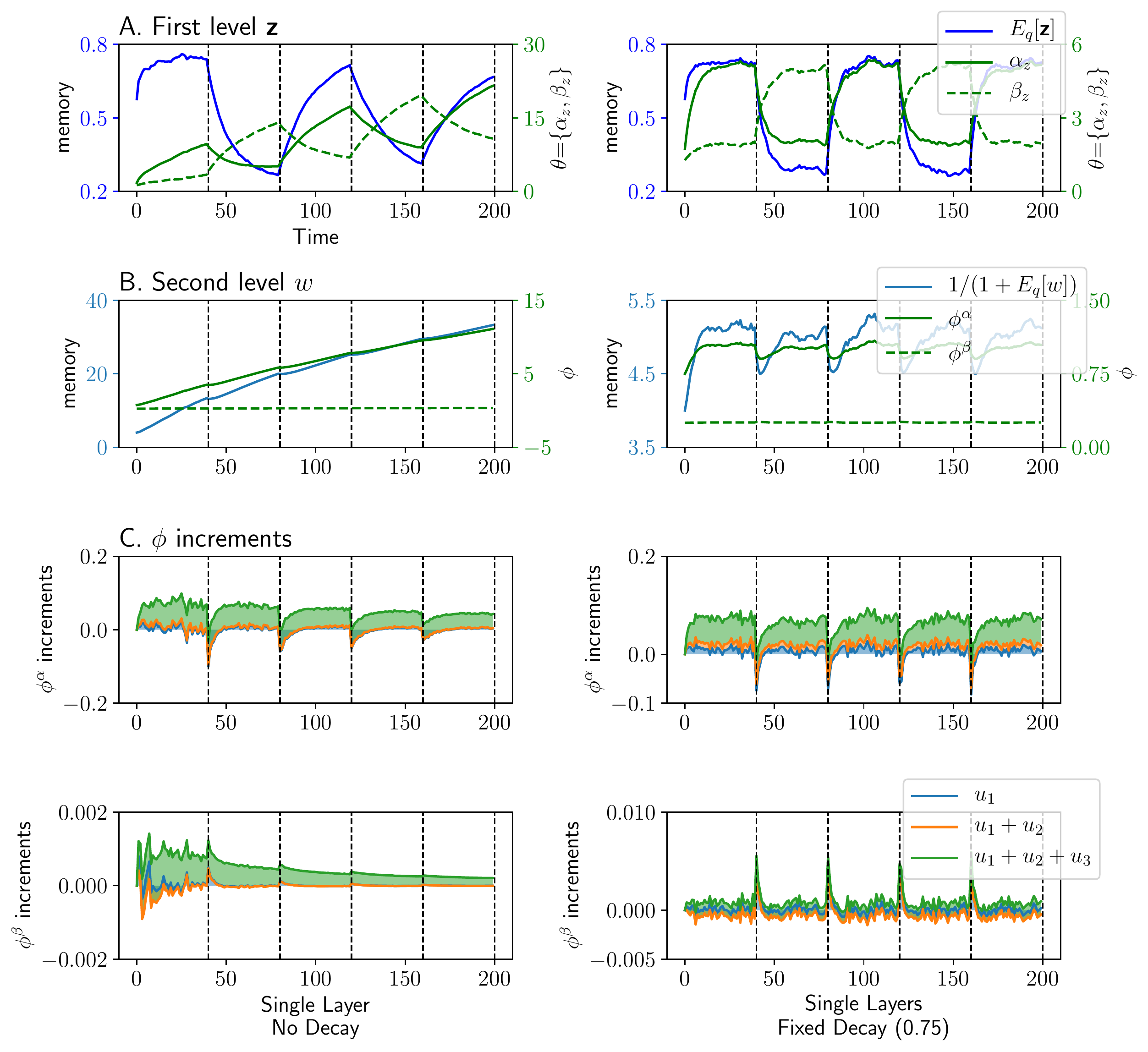}}
\caption{Binary learning with a single level of forgetting. Incremental (\textbf{Left column}) and fixed-decay (\textbf{Right column}) posterior learning of $w$. \textbf{A.} First level learning. The learner looses its capacity to forget as data are observed, because the expected effective memory (\textbf{B.}) tend to grow indefinitely when no decay was assumed. \textbf{C.} Trial-wise increment caused by $u_1,u_2$ and $u_3$. The effects of contingency changes decreased when no decay over $w$ was considered.}
\label{fig:Binary12}
\end{center}
\vskip -0.2in
\end{figure}

Adding a forgetting factor to the posterior of $w$ can moderate the effect of overtraining. In the case of a fixed-forgetting for the posterior probability of $w$ \Cref{fig:Binary12}, right column, the fitting is much more stable: the model is able to learn and forget the current distribution efficiently with a memory bounded at approximately 5 trials (i.e. $\E{q(w)}{w} \approx 0.8$). This shows that adding a forgetting over the posterior of $w$ effectively provides the flexibility we aim at: the contingency changes are efficiently detected, and the drop of $\dL$ (through $u_1$) triggers a resetting of the parameters in the following trials.

In the last case (\Cref{fig:Binary3}), the first level of the model acquires a higher memory than in the second example, due to the ability of the model to adapt the forgetting factor of $w$, which relaxes its bound. It is, however, more flexible than the first example.
\begin{figure}
\begin{center}
\vskip 0.2in
\centerline{\includegraphics[width=0.98\columnwidth]{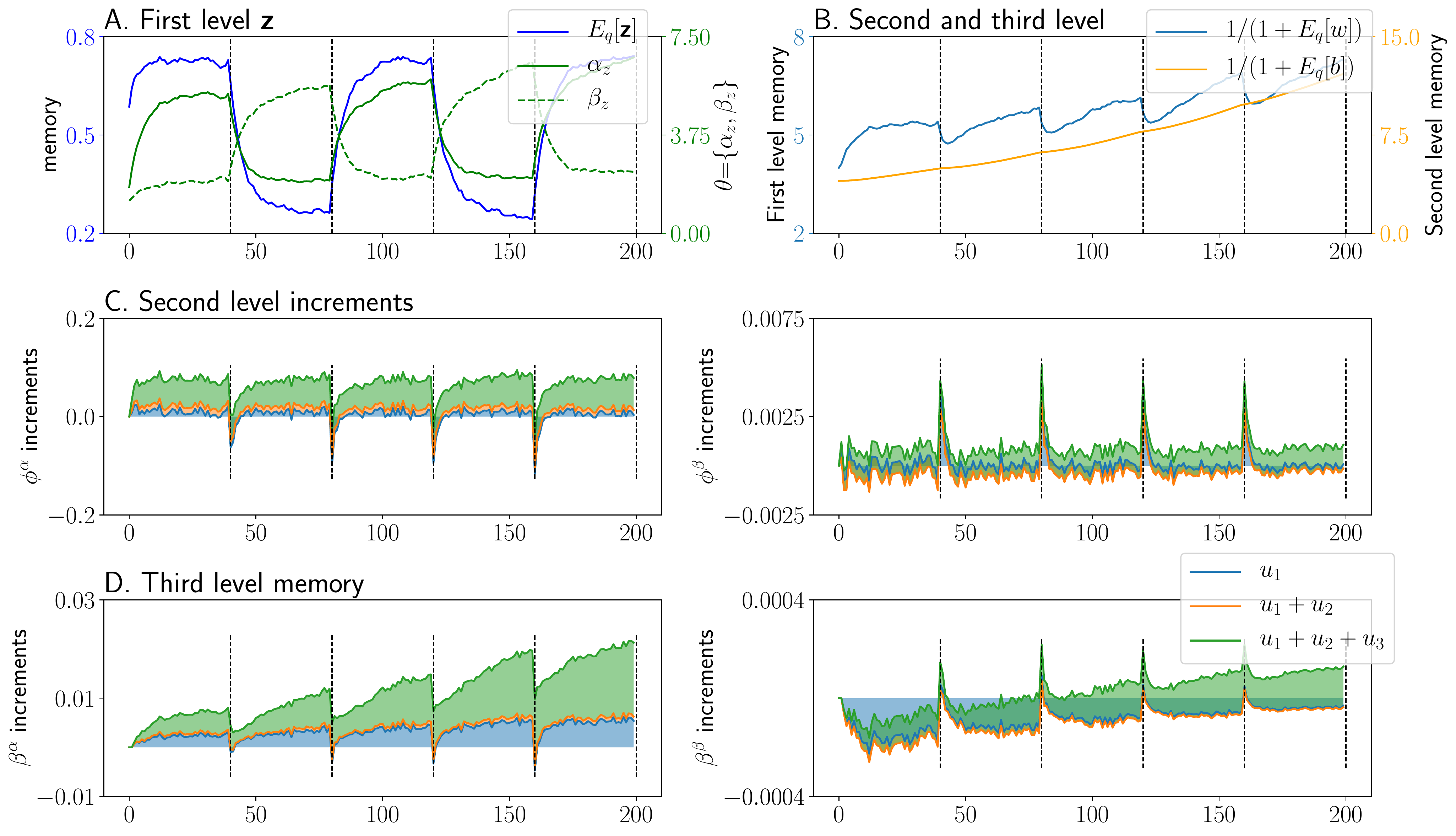}}
\caption{Binary learning with two levels of adaptive forgetting ($w$ and $b$) and a third fixed level $\gamma$. \textbf{D.} is similar to \textbf{C.} for the third level updates $q(b)$.}
\label{fig:Binary3}
\end{center}
\end{figure}



\subsection{Forward-Backward algorithm}
Let us consider the conjugate posterior of the distribution $p(x|\z)$ from the exponential family $p(\z)$ when the whole dataset has been observed. For a given $t \in 1:T$, one can derive the posterior probability of $\z$ given $x_t$ as:
\begin{equation}
    p(\z | x_t, \x_{\neg t};\btheta_0) = \frac{p(x_t\given\z)p(\z\given\x_{<t},\x_{>t};\btheta_0)}{p(x_t\given\x_{<t},\x_{>t})}
\end{equation}
Given \Cref{eq:MixtExpPrior} and \Cref{eq:updCVMP},
if $p(\z\given\btheta_0)$ is the conjugate prior of $p(x_t\given\x)$ and is from the exponential family, we can substitute the prior $p(\z\given\x_{<t},\x_{>t};\btheta_0)$ by $p(\z\given\widetilde{\x}_{<t},\widetilde{\x}_{>t};\btheta_0)$, where $\widetilde{\x}_{<t}$ and $\widetilde{\x}_{>t}$ are the effective samples retrieved from the forward and the backward application of the AFVF on the dataset, respectively. Formally, we have:
\begin{align}\label{eq:updCVMPFB}
    \btheta^\xi_t &= \befsups{f}\btheta^\xi_{t} +\befsups{b}\btheta^\xi_{t} - \sum_{j=1}^J \mathbf{T}({x_t}_j)-\btheta^\xi_0\\
    \btheta^\eta_t &= \befsups{f}\btheta^\eta_{t}+\befsups{b}\btheta^\eta_{t} - J - \btheta^\eta_0\nonumber
\end{align}
where the $f$ and $b$ superscripts index the forward and backward pass, respectively. In offline learning, this technique can increase the effective memory of the approximate posterior distribution just before and after the change trials.

\section{Related work}
Change detection is a broad field in machine learning, where no optimal and general solution exists \cite{KULHAVY1993}. Consequently, assumptions about the structure of the system can lead to very different algorithms and results.

The Kalman Filter \cite{Azizi2015} is a special case of Bayesian Filtering (BF) \cite{Doucet2001} that has had a large success in the Signal Processing literature due to its sparsity and efficiency. It is, however, highly restrictive and its assumptions need to be relaxed in many instances. One can discriminate two main approaches in order to deal with this problem: the first approach is to use a global approximation of BF such as Particle Filtering (PF) \cite{Smidl2008,Smidl2012,Okzan2012}, which enjoys a bounded error but suffers from a lower accuracy than other local approximations. The second class of algorithms comprises the Stabilized Forgetting (SF) family of algorithms \cite{KULHAVY1993,Azizi2015,Laar2017}, from which our model is a special case. SF suffers from an unbounded error, but it usually has a greater accuracy for a given amount of resources \cite{Smidl2008}. Note that SF has been shown to be essential to reduce the divergence between the true posterior and its approximation in recursive Bayesian estimation \cite{Karny2014}. As we apply the SF operator to estimate the posterior of $\z$ and the mixture weight $w$ (through the $b$ weighted mixture prior), we ensure that the divergence is reduced for both of these latent variables.

Even though our model is described as a Stabilized Exponential Forgetting \cite{Kulhavy1996} algorithm and is well suited for signal processing, it can be generalized to models where there is no prediction of future states (e.g. smoothing of a signal, reinforcement learning etc.) Also, it overcomes other methods in several following ways:

First, it uses a Beta prior on the mixing coefficient. This is unusual (but not unique \cite{Dedecius2012}), as most of previous approaches used a truncated exponential prior \cite{Smidl2005,Masegosa2017} or a fixed, linear mixture prior that account for the stability of the process \cite{Smidl2012}. In Stabilized Linear Forgetting, a Bernoulli prior with a Beta hyperprior has been proposed for the mixture weight \cite{Laar2017}. 
Our approach is designed to learn the posterior probability of the forgetting factor in a flexible manner. We show that this posterior probability depends upon its own (and possibly a mixture of) prior distribution and upon the prior covariance of the model parameters $C(\mathbf{T}(\z)\given\widehat{\bvartheta})$. This makes the change detection more subtle than an all-or-none process, as one might observe with a Bernoulli distribution. It also enables us to accumulate evidence for a change of distribution across trials, which can help to discriminate outliers from real, prolonged contingency changes. This is, to our knowledge, an entirely novel feature in the adaptive forgetting literature.

The second important novelty of our model lies in its hierarchical learning of the environment stability. This is somehow similar to the Hierarchical Gaussian Filter (HGF) \cite{Mathys2011,Mathys2014}. The present model is, however, much more general, as the generic form we provide can be applied to several members of the exponential family. Also, although the KL divergence (error term) of our model is not bounded in the long run, it can be efficiently applied to a large subset of datasets and models, whereas the HGF often fails to fit processes that are highly stationary, with many datapoints and/or with abrupt contingency changes.

\section{Experiments}
The HAFVF was coded in the Julia language \cite{Bezanson2017} using a Forward automatic differentiation algorithm \cite{Revels2016} for the NCVMP for the RL and AR parts of this section, and using an analytical gradient for the SGD part.

\subsection{Reinforcement Learning}
We first look at the behaviour of the model in the simple case of estimating the current distribution of a random variable sampled from a moving distribution.
We simulated two sequences of 2x200 datapoints where each pair of points was generated according to the same multivariate normal distribution with mean $\mu_1=\{-2,+2\}$ and $\mu_2=\{+2,-2\}$. We then added an independent random walk to these means.

We applied the Forward-Backward (FB) version of the HAFVF to these datasets. We used the same Normal Inverse Wishart prior for both of these results ($\mu_0=0$, $\kappa_0=0.1$, $\eta_0=3$, $\Lambda_0=I$). The prior over $w$ was manipulated to include a high confidence ($\phi^\alpha_t=9$, $\phi^\beta_t=1$) or a low confidence ($\phi^\alpha_t=0.9$, $\phi^\beta_t=0.1$) about the average value of $w$. Note that both of these priors had the same expected value. To avoid overfitting of early trials (which may happen using weak priors) while keeping the distribution flexible, we used a flat prior over $b$: $\bbeta_0=1$. The top level forgetting was ignored ($\gamma=1$). Results are shown in \Cref{fig:Xp1}.

\begin{figure}[t]
\vskip 0.2in
\begin{center}
\centerline{\includegraphics[width=0.95\columnwidth]{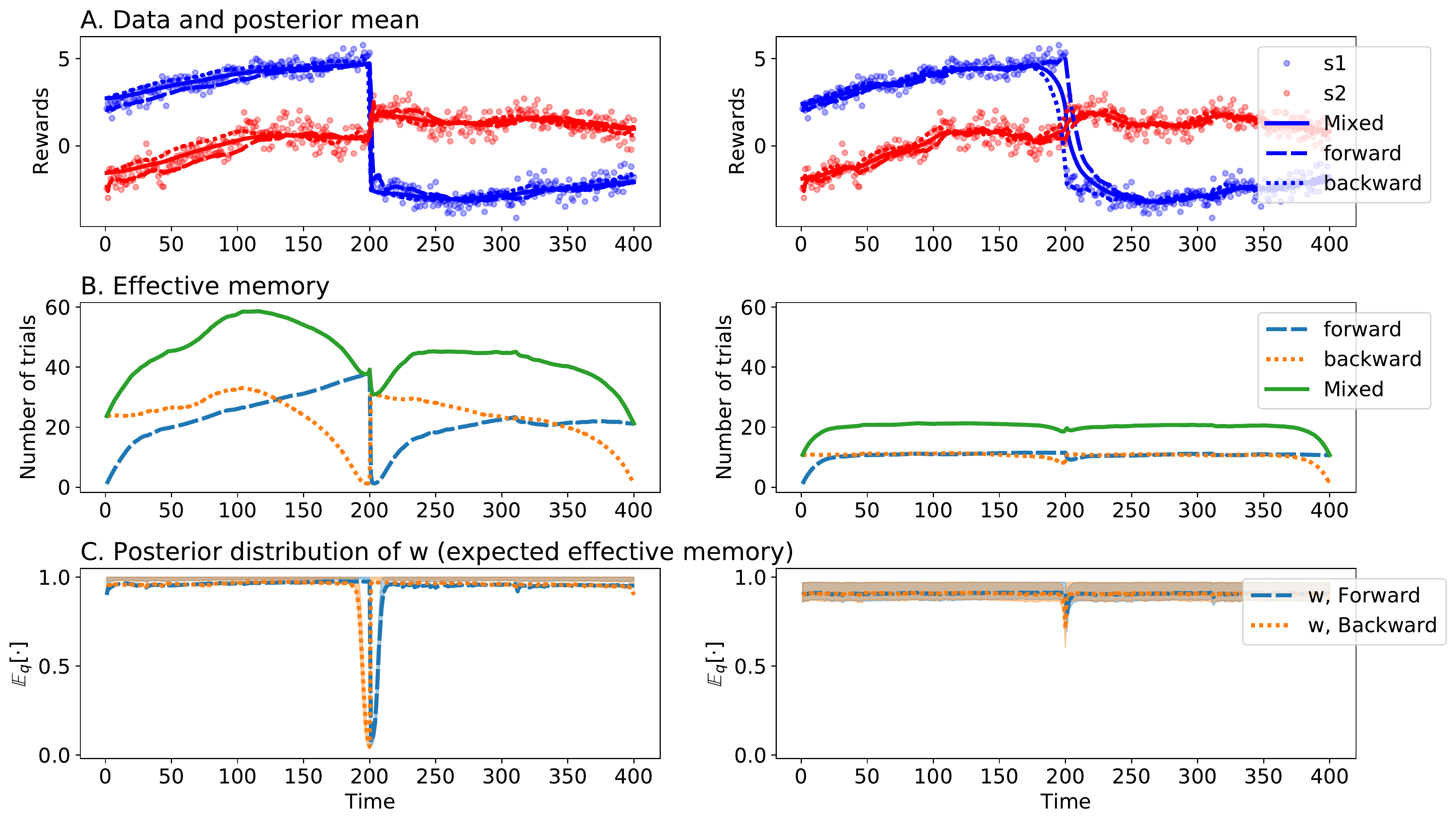}}
\caption{Experiment 1. Left column: weak prior over $w$. Right column: strong prior over $w$. Shaded areas represent the 95\% posterior confidence interval. See text for more comments.}
\label{fig:Xp1}
\end{center}
\vskip -0.2in
\end{figure}

As the first setting had a weak prior over $w$, it had more freedom to adapt the posterior distribution to the current data. The effective memory trace (measured with the parameter $\kappa_t$) was greater when the environment was stable, and changed faster after the contingency change than when the prior was more confident, where the adaptation was slow and the effective memory did not increase much above the prior-defined threshold 10 (or 20 for the FB algorithm). 

The behaviour of both models after the contingency change is informative about the effect that the prior had on the inference process: the weak-prior forgetting factor dropped immediately after an unexpected observation was made, which can be advantageous when sudden changes are expected, but maladaptive in the presence of outliers. The strong-prior model behaved in the opposite way, and handled the change more slowly than its weak-prior counterpart.

It is interesting to note that the posterior probability distribution of $b$ (not shown in the figure) was also more flexible in the first model fit than in the second, because the observations in the level below were also more variable, due to less confident prior over $w$: this had the effect of increasing the gain in precision over $w$, which increased the strength of the posterior over $b$ (through $u_2$ and $u_3$ in \Cref{eq:UpdPhi}).


\subsection{Autoregressive model}
We fitted the HAFVF to a simulated a non-stationary sinusoidal signal of 400 datapoints issued from two separate systems with a low and high frequency. These signals were randomly generated as the sum of five sinusoidal waves, with the scope of observing whether the algorithm was able to adapt to the abrupt contingency change.

Because we also aimed at a more informative view on the performance of the algorithm in the presence of artifacts, we altered this signal by adding two impulses of 2 a.u. at $t=100$ and $t=300$. 

We studied a single implementation of the model, with a relatively strong prior over $w$ ($\bphi=\{4.5,0.5\}$) and a flat prior over $b$, ($\bbeta=\{1.0,1.0\}$). The Forward-backward version of the algorithm was applied. We arbitrarily chose a forward and a backward order of 10 samples. \Cref{fig:Xp2} shows the results of this experiment. 

\begin{figure}[t]
\vskip 0.2in
\begin{center}
\centerline{\includegraphics[width=\columnwidth]{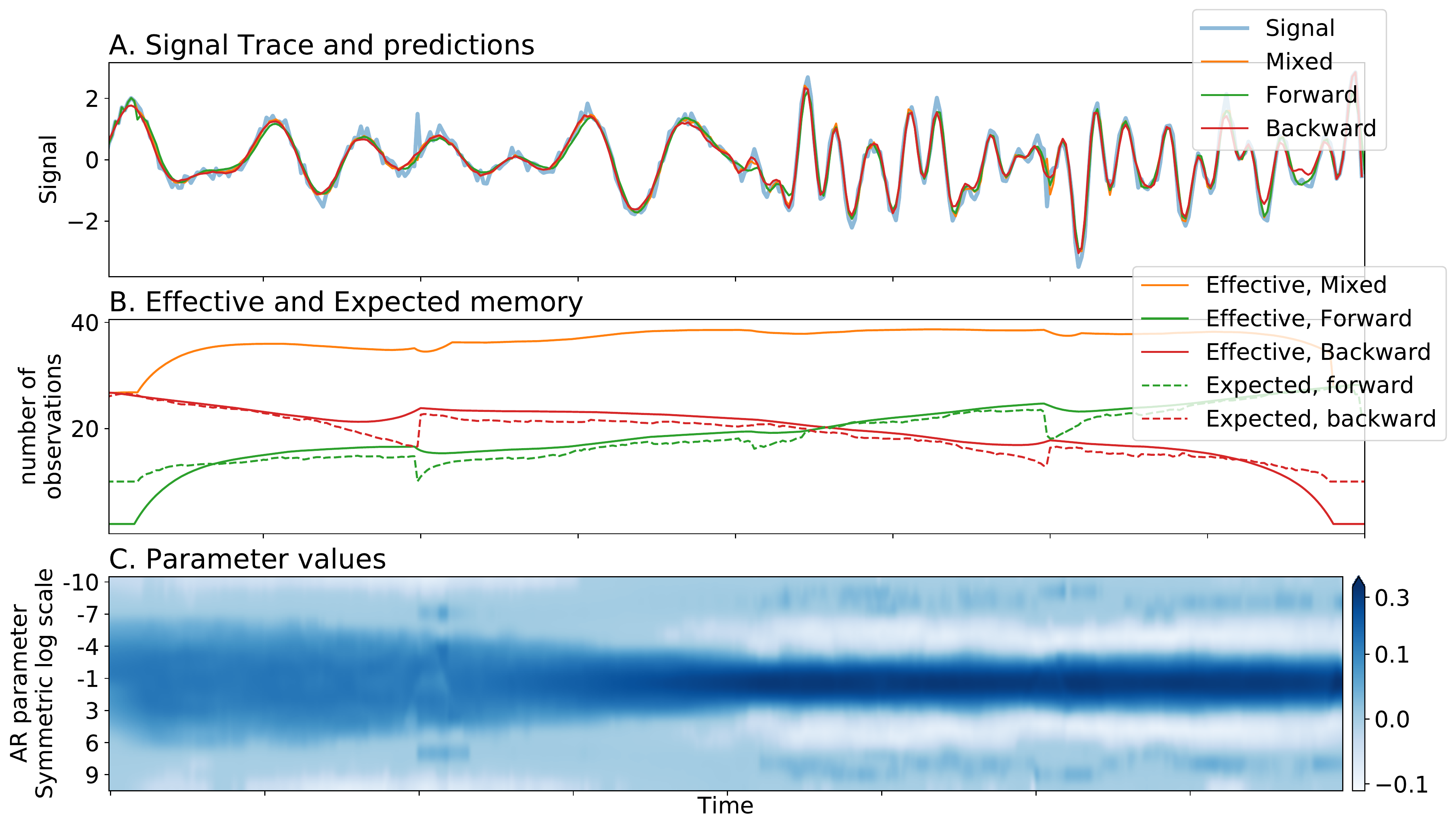}}
\caption{Experiment 2: Autoregressive model with a weak prior over $w$. \textbf{A.} Observations and simulated response of the models. The zoomed windows show the effect of the artifacts on the estimated mean value. \textbf{B.} Effective memory (the "effective number of observations" parameter of the posterior $\btheta_t$: namely $\kappa_t$) of the three parts of the algorithm (plain lines), and corresponding expected effective memory: $1/(1+\E{q(w)}{w}$ (dashed lines). Outliers had a limited impact on learning in both cases. \textbf{C.} Value of the AR mean weights through time. The model dealt adequately with the outliers (as the value of the parameters did not change substentially) and with the contingency change (as the values were adapted to the two different signals).}
\label{fig:Xp2}
\end{center}
\vskip -0.2in
\end{figure}

\subsection{Stochastic Gradient Descent}
SGD is a popular technique to find the minimum of (often computationally expensive) loss-functions over large datasets \cite{Tran2015a} or involving intractable integrals \cite{Kingma2013} that can be sampled from. However, SGD can be unstable, especially with recurrent neural networks \cite{Fabius2014} where an isolated, highly noisy sample in the sequence can lead to a degenerate sample of the gradient over the whole sequence. This effect is further magnified when the sample size is low.

We implemented a slightly modified version of the HAFVF in a SGD framework, intended to be similar to the Adam optimizer\footnote{More details can be found in the supplementary materials} \cite{Kingma2015}. In short, we used two specific decays $w_1$ and $w_2$ for the posterior of the means and variances of the gradients, respectively, while ensuring that $w_1<w_2$. We modelled these posteriors as a set of Normal-Inverse-Gamma distributions. Each set of weights and biases of the multilayered perceptrons was provided with its own hierarchical decay, to take advantage of the fact that some groups of partial derivatives might be more noisy than others. We used this algorithm with a strong prior over $w_1$ and $w_2$ ($\bphi_1\!=\!\{9,1\}$ and $\bphi_2\!=\!\{9.5,0.5\}$), to limit the effect of degenerated gradients on the approximate posteriors.

This algorithm was tested with a variational recurrent auto-encoding regression model inspired by \cite{Moens2017}, where the output probability density was a first passage density of a Wiener process \cite{Ratcliff1978}. The simulated dataset was composed of 64 subjects performing a 500 trials long two alternative forced choice task \cite{Britten1992}, where choices and reaction times were the model was aiming to predict. At each step of the SGD process, 5 subjects were sampled, for a total of 2500 trials. 

\Cref{fig:Xp3} compares the results of the AdaFVF SGD optimizer with the Adam optimizer, executed with the default parameters ($\alpha=0.001$, $\beta_1=0.9$, $\beta_2=0.999$). The AdaFVF showed to be less affected by degenerate samples than Adam, as can be seen from the ELBO trace and from the heat plots of the expected memories, for an estimated average negative ELBO of $1.08$ for Adam and $0.85$ for AdaFVF at the iteration 10000.

\begin{figure}[t]
\vskip 0.2in
\begin{center}
\centerline{\includegraphics[width=0.85\columnwidth]{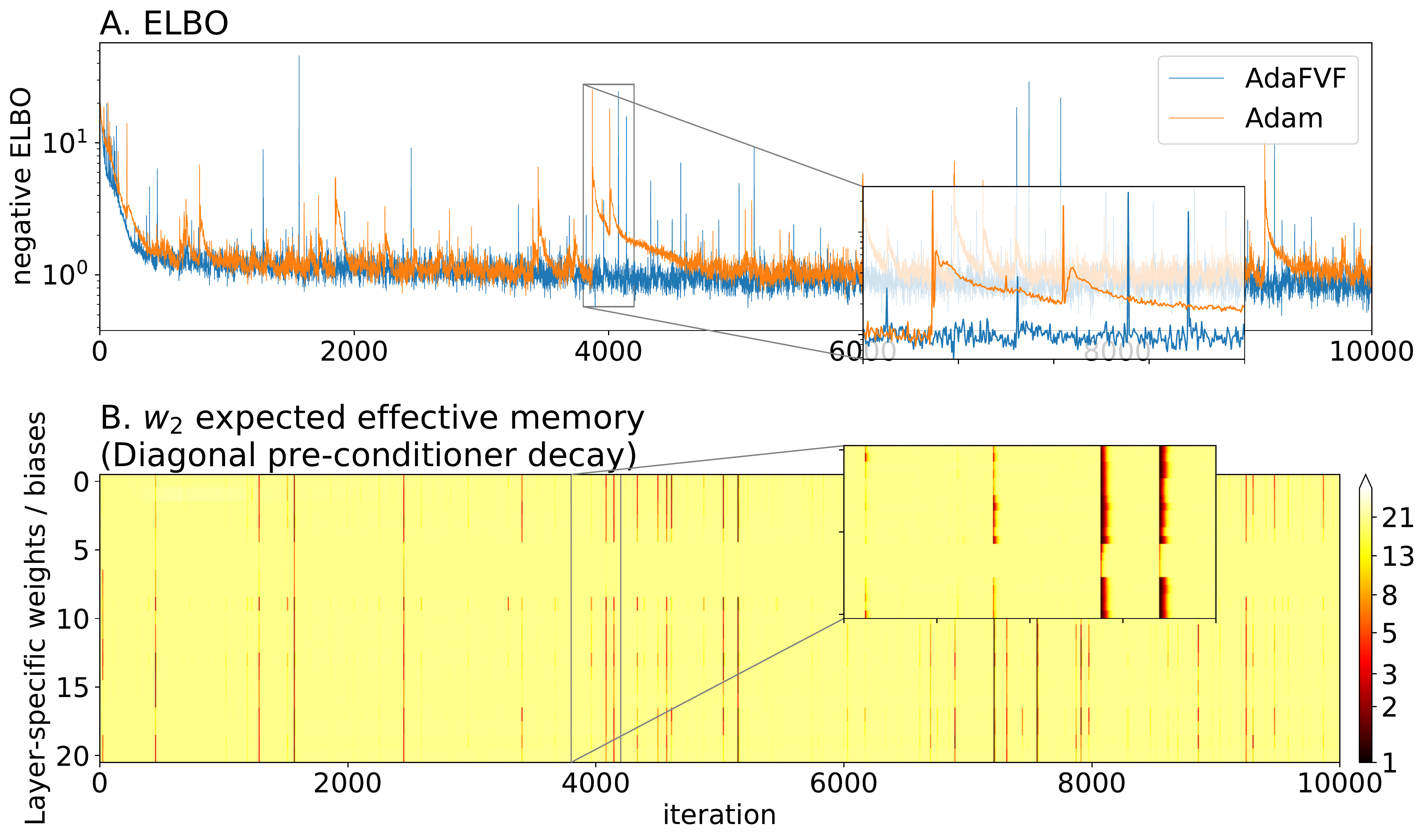}}
\caption{Experiment 3: SGD with the HAFVF. \textbf{A.} ELBO for the Adam optimizer and for the AdaFVF SGD. After an outlier was sampled, the AdaFVF simply forgot the gradient history, and reset its belief to the naive prior, thereby decreasing the relative contribution of this sample. \textbf{B.} Expected memory of the variance posteriors. The impact of outliers is highlighted by the zoomed windows.}
\label{fig:Xp3}
\end{center}
\vskip -0.2in
\end{figure}

\section{Limitations, perspective and conclusion}
Our algorithm has the following limitations: The first  lies in the exponential form we have given to the mixture distributions. A linear form, similar to \cite{Laar2017} could however also be implemented, at specific levels of the hierarchy of the whole model. It may also be difficult to choose an adequate prior on the various levels of the hierarchy. The naive prior of the lower level $p_0(\z)$ is usually crucial but hard to specify, but this is a generic feature in adaptive forgetting. For the two top levels, we propose as a rule of thumb to use a weak prior in situations where abrupt contingency changes are expected. They can also provide a higher memory to the model. They are, however, more affected by outliers than stronger priors. The latter option is therefore advisable in situations where the sequence is expected to contain outliers, and when large amount of data are modelled. There is, however, no generic solution and one might need to try different model specifications before selecting the optimal (i.e. more suited) one. 

The HAFVF and variants could lead to many promising developments in RL related fields, where they might help to prevent unnecessary forgetting of past events during exploration, in signal processing and more distant fields such as deep learning, where they could be used to prevent the occurrence of catastrophic forgetting.

In conclusion, we present a new generic model aimed at coping with abrupt or slow signal changes and presence of artifacts. This model flexibly adapts its memory to the volatility of the environment, and reduces the risk of abruptly forgetting its learned belief when isolated, unexpected events occur. The HAFVF constitutes a promising tool for decay adaptation in RL, system identification and SGD.

\pagebreak

\section*{Acknowledgements}


We thank the reviewers for their careful reading and precious comments on the manuscript. We also thank Oleg Solopchuk and Alexandre Z\'enon, who greatly contributed to the development and writing of the present manuscript. The present work was supported by grants from the ARC (Actions de Recherche Concert\'ees, Communaut\'e Francaise de Belgique).


\bibliography{library}

\begin{thebibliography}{34}
\providecommand{\natexlab}[1]{#1}
\providecommand{\url}[1]{\texttt{#1}}
\expandafter\ifx\csname urlstyle\endcsname\relax
  \providecommand{\doi}[1]{doi: #1}\else
  \providecommand{\doi}{doi: \begingroup \urlstyle{rm}\Url}\fi

\bibitem[Azizi \& Quinn(2015)Azizi and Quinn]{Azizi2015}
Azizi, S. and Quinn, A.
\newblock {A data-driven forgetting factor for stabilized forgetting in
  approximate Bayesian filtering}.
\newblock In \emph{2015 26th Irish Signals and Systems Conference (ISSC)},
  volume 11855, pp.\  1--6. IEEE, jun 2015.
\newblock ISBN 978-1-4673-6974-9.
\newblock \doi{10.1109/ISSC.2015.7163747}.
\newblock URL \url{http://ieeexplore.ieee.org/document/7163747/}.

\bibitem[Bezanson et~al.(2017)Bezanson, Edelman, Karpinski, and
  Shah]{Bezanson2017}
Bezanson, Jeff, Edelman, Alan, Karpinski, Stefan, and Shah, Viral~B.
\newblock {Julia: A Fresh Approach to Numerical Computing}.
\newblock \emph{SIAM Review}, 59\penalty0 (1):\penalty0 65--98, jan 2017.
\newblock ISSN 0036-1445.
\newblock \doi{10.1137/141000671}.
\newblock URL \url{http://arxiv.org/abs/1411.1607
  http://epubs.siam.org/doi/10.1137/141000671}.

\bibitem[Britten et~al.(1992)Britten, Shadlen, Newsome, and
  Movshon]{Britten1992}
Britten, K~H, Shadlen, M~N, Newsome, W~T, and Movshon, J~a.
\newblock {The analysis of visual motion: a comparison of neuronal and
  psychophysical performance.}
\newblock \emph{The Journal of neuroscience : the official journal of the
  Society for Neuroscience}, 12\penalty0 (12):\penalty0 4745--65, dec 1992.
\newblock ISSN 0270-6474.
\newblock \doi{10.1.1.123.9899}.
\newblock URL \url{http://www.ncbi.nlm.nih.gov/pubmed/1464765}.

\bibitem[Dedecius \& Hofman(2012)Dedecius and Hofman]{Dedecius2012}
Dedecius, Kamil and Hofman, Radek.
\newblock {Autoregressive model with partial forgetting within
  Rao-Blackwellized particle filter}.
\newblock \emph{Communications in Statistics: Simulation and Computation},
  41\penalty0 (5):\penalty0 582--589, 2012.
\newblock ISSN 03610918.
\newblock \doi{10.1080/03610918.2011.598992}.

\bibitem[Diaconis \& Ylvisaker(1979)Diaconis and Ylvisaker]{Diaconis1979}
Diaconis, Persi and Ylvisaker, Donald.
\newblock {Conjugate Priors for Exponential Families}.
\newblock \emph{The Annals of Statistics}, 7\penalty0 (2):\penalty0 269--281,
  mar 1979.
\newblock ISSN 0090-5364.
\newblock \doi{10.1214/aos/1176344611}.
\newblock URL \url{http://projecteuclid.org/euclid.aos/1176344611}.

\bibitem[Dickinson(1985)]{Dickinson1985}
Dickinson, A.
\newblock {Actions and Habits: The Development of Behavioural Autonomy}.
\newblock \emph{Philosophical Transactions of the Royal Society B: Biological
  Sciences}, 308\penalty0 (1135):\penalty0 67--78, feb 1985.
\newblock ISSN 0962-8436.
\newblock \doi{10.1098/rstb.1985.0010}.
\newblock URL
  \url{http://rstb.royalsocietypublishing.org/cgi/doi/10.1098/rstb.1985.0010}.

\bibitem[Doucet et~al.(2001)Doucet, {De Freitas}, and Gordon]{Doucet2001}
Doucet, a, {De Freitas}, N, and Gordon, N.
\newblock {Sequential Monte Carlo Methods in Practice}.
\newblock \emph{Springer New York}, pp.\  178--195, 2001.
\newblock ISSN 1530-888X.
\newblock \doi{10.1198/tech.2003.s23}.
\newblock URL \url{http://www-sigproc.eng.cam.ac.uk/{~}ad2/book.html}.

\bibitem[Fabius \& van Amersfoort(2014)Fabius and van Amersfoort]{Fabius2014}
Fabius, Otto and van Amersfoort, Joost~R.
\newblock {Variational Recurrent Auto-Encoders}.
\newblock \penalty0 (2013):\penalty0 1--5, dec 2014.
\newblock URL \url{http://arxiv.org/abs/1412.6581}.

\bibitem[Hoffman et~al.(2012)Hoffman, Blei, Wang, and Paisley]{Hoffman2013}
Hoffman, Matt, Blei, David~M., Wang, Chong, and Paisley, John.
\newblock {Stochastic Variational Inference}.
\newblock \penalty0 (2), 2012.
\newblock ISSN 1532-4435.
\newblock \doi{citeulike-article-id:10852147}.
\newblock URL \url{http://arxiv.org/abs/1206.7051}.

\bibitem[Jaakkola \& Jordan(2000)Jaakkola and Jordan]{Jaakkola2000}
Jaakkola, T~S and Jordan, M~I.
\newblock {Bayesian parameter estimation via variational methods}.
\newblock \emph{Statistics And Computing}, 10\penalty0 (1):\penalty0 25--37,
  2000.
\newblock ISSN 0960-3174.
\newblock \doi{10.1023/A:1008932416310}.
\newblock URL
  \url{papers2://publication/uuid/6B83D6F3-56DD-4219-A399-4FBC826D8223}.

\bibitem[K{\'{a}}rn{\'{y}}(2014)]{Karny2014}
K{\'{a}}rn{\'{y}}, Miroslav.
\newblock {Approximate Bayesian recursive estimation}.
\newblock \emph{Information Sciences}, 285\penalty0 (1):\penalty0 100--111,
  2014.
\newblock ISSN 00200255.
\newblock \doi{10.1016/j.ins.2014.01.048}.

\bibitem[Kingma \& Ba(2015)Kingma and Ba]{Kingma2015}
Kingma, Diederik~P. and Ba, Jimmy~Lei.
\newblock {Adam: a Method for Stochastic Optimization}.
\newblock \emph{International Conference on Learning Representations 2015},
  pp.\  1--15, 2015.

\bibitem[Kingma \& Welling(2013)Kingma and Welling]{Kingma2013}
Kingma, Diederik~P and Welling, Max.
\newblock {Auto-Encoding Variational Bayes}.
\newblock dec 2013.
\newblock URL \url{http://arxiv.org/abs/1312.6114
  http://www.aanda.org/10.1051/0004-6361/201527329}.

\bibitem[Knowles \& Minka(2011)Knowles and Minka]{Knowles2011}
Knowles, David and Minka, Thomas~P.
\newblock {Non-conjugate variational message passing for multinomial and binary
  regression}.
\newblock \emph{Nips}, pp.\  1--9, 2011.
\newblock URL \url{http://eprints.pascal-network.org/archive/00008459/}.

\bibitem[Kulhavy \& Karny(1984)Kulhavy and Karny]{Kulhavy1984}
Kulhavy, R and Karny, M.
\newblock {Tracking of slowly varying parameters by directional forgetting}.
\newblock \emph{Preprints 9ih IFAC Congress}, 10:\penalty0 178--183, 1984.
\newblock ISSN 07411146.
\newblock \doi{10.1016/S1474-6670(17)61051-6}.
\newblock URL \url{citeulike-article-id:10156625}.

\bibitem[Kulhav{\'{y}} \& Kraus(1996)Kulhav{\'{y}} and Kraus]{Kulhavy1996}
Kulhav{\'{y}}, R. and Kraus, F.J.
\newblock {On Duality of Exponential and Linear Forgetting}.
\newblock \emph{IFAC Proceedings Volumes}, 29\penalty0 (1):\penalty0
  5340--5345, jun 1996.
\newblock ISSN 14746670.
\newblock \doi{10.1016/S1474-6670(17)58530-4}.
\newblock URL
  \url{http://linkinghub.elsevier.com/retrieve/pii/S1474667017585304}.

\bibitem[KULHAV{\'{Y}} \& ZARROP(1993)KULHAV{\'{Y}} and ZARROP]{KULHAVY1993}
KULHAV{\'{Y}}, R. and ZARROP, M.~B.
\newblock {On a general concept of forgetting}.
\newblock \emph{International Journal of Control}, 58\penalty0 (4):\penalty0
  905--924, oct 1993.
\newblock ISSN 0020-7179.
\newblock \doi{10.1080/00207179308923034}.
\newblock URL
  \url{http://www.tandfonline.com/doi/abs/10.1080/00207179308923034}.

\bibitem[Laar et~al.(2017)Laar, Cox, Diepen, and Vries]{Laar2017}
Laar, Thijs Van~De, Cox, Marco, Diepen, Anouk~Van, and Vries, Bert~De.
\newblock {Variational Stabilized Linear Forgetting in State-Space Models}.
\newblock \penalty0 (Section II):\penalty0 848--852, 2017.

\bibitem[Mandt et~al.(2014)Mandt, McInerney, Abrol, Ranganath, and
  Blei]{Mandt2014b}
Mandt, Stephan, McInerney, James, Abrol, Farhan, Ranganath, Rajesh, and Blei,
  David.
\newblock {Variational Tempering}.
\newblock 41, 2014.
\newblock URL \url{http://arxiv.org/abs/1411.1810}.

\bibitem[Masegosa et~al.(2017)Masegosa, Nielsen, Langseth, Ramos-Lopez,
  Salmeron, and Madsen]{Masegosa2017}
Masegosa, Andres, Nielsen, Thomas~D, Langseth, Helge, Ramos-Lopez, Dario,
  Salmeron, Antonio, and Madsen, Anders~L.
\newblock {Bayesian Models of Data Streams with Hierarchical Power Priors}.
\newblock \emph{International Conference on Machine Learning (ICM)},
  70:\penalty0 2334--2343, 2017.
\newblock URL \url{http://proceedings.mlr.press/v70/masegosa17a.html}.

\bibitem[Mathys(2011)]{Mathys2011}
Mathys, Christoph.
\newblock {A Bayesian foundation for individual learning under uncertainty}.
\newblock \emph{Frontiers in Human Neuroscience}, 5\penalty0 (May):\penalty0
  1--20, 2011.
\newblock ISSN 16625161.
\newblock \doi{10.3389/fnhum.2011.00039}.
\newblock URL
  \url{http://journal.frontiersin.org/article/10.3389/fnhum.2011.00039/abstract}.

\bibitem[Mathys(2016)]{Mathys2016}
Mathys, Christoph.
\newblock {How could we get nosology from computation?}
\newblock \emph{Computational Psychiatry: New Perspectives on Mental Illness},
  20:\penalty0 121--138, 2016.
\newblock URL \url{https://mitpress.mit.edu/books/computational-psychiatry}.

\bibitem[Mathys et~al.(2014)Mathys, Lomakina, Daunizeau, Iglesias, Brodersen,
  Friston, and Stephan]{Mathys2014}
Mathys, Christoph~D, Lomakina, Ekaterina~I, Daunizeau, Jean, Iglesias, Sandra,
  Brodersen, Kay~H, Friston, Karl~J, and Stephan, Klaas~E.
\newblock {Uncertainty in perception and the Hierarchical Gaussian Filter}.
\newblock \emph{Frontiers in Human Neuroscience}, 8\penalty0
  (November):\penalty0 825, nov 2014.
\newblock ISSN 1662-5161.
\newblock \doi{10.3389/fnhum.2014.00825}.
\newblock URL
  \url{http://journal.frontiersin.org/article/10.3389/fnhum.2014.00825/abstract}.

\bibitem[Moens \& Zenon(2018)Moens and Zenon]{Moens2017}
Moens, Vincent and Zenon, Alexandre.
\newblock {Recurrent Auto-Encoding Drift Diffusion Model}.
\newblock \emph{bioRxiv}, 2018.
\newblock \doi{10.1101/220517}.
\newblock URL
  \url{https://www.biorxiv.org/content/early/2017/11/17/220517?{\%}3Fcollection=}.

\bibitem[{\"{O}}zkan et~al.(2013){\"{O}}zkan, {\v{S}}m{\'{i}}dl, Saha,
  Lundquist, and Gustafsson]{Okzan2012}
{\"{O}}zkan, Emre, {\v{S}}m{\'{i}}dl, V{\'{a}}clav, Saha, Saikat, Lundquist,
  Christian, and Gustafsson, Fredrik.
\newblock {Marginalized adaptive particle filtering for nonlinear models with
  unknown time-varying noise parameters}.
\newblock \emph{Automatica}, 49\penalty0 (6):\penalty0 1566--1575, jun 2013.
\newblock ISSN 00051098.
\newblock \doi{10.1016/j.automatica.2013.02.046}.
\newblock URL
  \url{http://linkinghub.elsevier.com/retrieve/pii/S0005109813001350}.

\bibitem[Ratcliff(1978)]{Ratcliff1978}
Ratcliff, Roger.
\newblock {A theory of memory retrieval}.
\newblock \emph{Psychological Review}, 85\penalty0 (2):\penalty0 59--108, 1978.
\newblock \doi{http://dx.doi.org/10.1037/0033-295X.85.2.59}.

\bibitem[Revels et~al.(2016)Revels, Lubin, and Papamarkou]{Revels2016}
Revels, Jarrett, Lubin, Miles, and Papamarkou, Theodore.
\newblock {Forward-Mode Automatic Differentiation in Julia}.
\newblock \penalty0 (April):\penalty0 4, 2016.
\newblock URL \url{http://arxiv.org/abs/1607.07892}.

\bibitem[Smidl(2004)]{Smidl2004}
Smidl, V.
\newblock \emph{{The Variational Bayes Approach in Signal Processing}}.
\newblock PhD thesis, 2004.
\newblock URL \url{http://staff.utia.cz/smidl/Public/Thesis-final.pdf}.

\bibitem[Smidl \& Quinn(2005)Smidl and Quinn]{Smidl2005}
Smidl, V. and Quinn, A.
\newblock {Mixture-based extension of the AR model and its recursive Bayesian
  identification}.
\newblock \emph{IEEE Transactions on Signal Processing}, 53\penalty0
  (9):\penalty0 3530--3542, sep 2005.
\newblock ISSN 1053-587X.
\newblock \doi{10.1109/TSP.2005.853103}.
\newblock URL \url{http://ieeexplore.ieee.org/document/1495888/}.

\bibitem[Smidl \& Quinn(2008)Smidl and Quinn]{Smidl2008}
Smidl, V. and Quinn, Anthony.
\newblock {Variational Bayesian Filtering}.
\newblock \emph{IEEE Transactions on Signal Processing}, 56\penalty0
  (10):\penalty0 5020--5030, oct 2008.
\newblock ISSN 1053-587X.
\newblock \doi{10.1109/TSP.2008.928969}.
\newblock URL \url{http://ieeexplore.ieee.org/document/4585346/}.

\bibitem[Smidl \& Gustafsson(2012)Smidl and Gustafsson]{Smidl2012}
Smidl, Vaclav and Gustafsson, Fredrik.
\newblock {Bayesian estimation of forgetting factor in adaptive filtering and
  change detection}.
\newblock In \emph{2012 IEEE Statistical Signal Processing Workshop (SSP)},
  number~1, pp.\  197--200. IEEE, aug 2012.
\newblock ISBN 978-1-4673-0182-4.
\newblock \doi{10.1109/SSP.2012.6319658}.
\newblock URL \url{http://ieeexplore.ieee.org/document/6319658/}.

\bibitem[Sutton \& Barto(1998)Sutton and Barto]{Sutton1998}
Sutton, Richard~S and Barto, Andrew~G.
\newblock {Introduction to Reinforcement Learning}.
\newblock \emph{Learning}, 4:\penalty0 1--5, 1998.
\newblock ISSN 10743529.
\newblock \doi{10.1.1.32.7692}.
\newblock URL \url{http://dl.acm.org/citation.cfm?id=551283}.

\bibitem[Tran et~al.(2015)Tran, Toulis, and Airoldi]{Tran2015a}
Tran, Dustin, Toulis, Panos, and Airoldi, Edoardo~M.
\newblock {Stochastic gradient descent methods for estimation with large data
  sets}.
\newblock VV\penalty0 (October), sep 2015.
\newblock URL \url{http://arxiv.org/abs/1509.06459}.

\bibitem[Winn et~al.(2005)Winn, Bishop, and Jaakkola]{Winn2005}
Winn, John, Bishop, Cm, and Jaakkola, T.
\newblock {Variational Message Passing}.
\newblock \emph{Journal of Machine Learning Research}, 6:\penalty0 661--694,
  2005.
\newblock ISSN 1532-4435.
\newblock \doi{10.1007/s002130100880}.

\end{thebibliography}
\bibliographystyle{icml2018}

\end{document}